\documentclass[letterpaper]{article}
\usepackage[preprint]{aaai2027}
\usepackage[hyphens]{url}
\usepackage{graphicx}
\usepackage{natbib}
\usepackage{caption}
\usepackage{amsmath,amssymb,amsfonts,mathtools}
\usepackage{booktabs}
\usepackage{array}
\usepackage{multirow}
\usepackage{placeins}
\newcommand{\best}[1]{\textbf{\boldmath #1}}
\usepackage{amsthm}
\newtheorem{proposition}{Proposition}
\newtheorem{assumption}{Assumption}

\title{CORA-Diff: Confidence-Oriented Residual Acceptance for Efficient Diffusion Language Model Inference}
\author{Yifan Wu, Yufeng Zhang\corresponding, Kenli Li}
\affiliations{
College of Computer Science, Hunan University\\
Changsha 410082, China\\
\{wuyf,yufengzhang,lkl\}@hnu.edu.cn
}

\begin{document}
\maketitle
\begin{abstract}
Diffusion language models (DLMs) update many tokens in parallel, yet
practical decoders often use a fixed denoising horizon. Many predictions
stabilize early, but blockwise decoding continues until the remaining
positions are resolved, leaving repeated dense forward passes. Existing
accelerators reduce this redundancy but often rely on learned filters,
modified scores, dependency models, or cache-specific mechanisms. We
instead ask whether native trajectory signals alone can identify residual
positions likely to match the deterministic dense endpoint. We propose
\textbf{CORA-Diff}, a training-free method that preserves the original
transfer rule and applies confidence-and-persistence gating only to
positions that rule leaves unresolved. Accepted tokens remain visible as
context, and the block terminates once all positions are resolved. This
requires no backbone change, learned acceptance model, or logit
modification. We analyze reliability through agreement with the
fixed-horizon dense endpoint. Our theory explains why high-confidence,
persistent predictions are more likely to match this endpoint, and
paired post-intervention trajectories provide direct empirical support.
We select one operating point on a separate GSM8K calibration subset and
freeze it for all subsequent evaluations.
Under a matched Learn2PD-style LLaDA protocol, CORA-Diff has the lowest
measured runtime in all eight task--length settings. Task scores match or
exceed dense decoding in five settings, and the largest observed drop is
1.22 points. Its incremental speedups over EOS-aware dense decoding are
$2.70\times$ and $3.32\times$ on GSM8K and HumanEval. It
also reaches $13.14\times$ under the fixed-horizon 1024/1024
mechanism-isolation protocol and transfers to Dream without retuning at
$3.18\times$--$3.53\times$. These results demonstrate that native
confidence and persistence are sufficient in practice for reliable
residual acceptance, allowing a simple training-free gate to remove
substantial repeated denoising computation while preserving task quality.
\end{abstract}

\begin{links}
\link{Code}{https://github.com/wyffffff/cora-diff-llada}
\end{links}

\section{Introduction}
\label{sec:introduction}

Autoregressive language models generate tokens sequentially, whereas
diffusion language models (DLMs) can update many positions in
parallel~\citep{austin2021structured,lou2024sedd,shi2024simplified,sahoo2024mdlm}.
This parallelism is promising, but it does not guarantee fast inference.
Many practical DLM decoders use a fixed denoising horizon and execute one
dense Transformer pass per step~\citep{nie2025llada,ye2025dream}. In
blockwise variants, this horizon is assigned to each generation
block~\citep{nie2025llada,arriola2025blockdiffusion}. These passes may
continue after many predictions have stabilized. Because every executed
pass remains dense, accepting individual tokens reduces latency only when
the original transfer rule and early acceptance jointly resolve the active
block. The central challenge is therefore to determine when positions left
unresolved by the original rule are reliable enough to accept, so that the
active block can finish and skip its remaining dense passes.

Prior work confirms that fixed-horizon decoding can waste computation.
Prophet observes \emph{early answer convergence}, and Learn2PD shows
that some tokens match the final output well before decoding
ends~\citep{li2025prophet,bao2025learn2pd}. Existing accelerators use
learned acceptance~\citep{bao2025learn2pd,chen2025dparallel}, logit
modification~\citep{wang2025creditdecoding}, dependency-aware or adaptive
schedules~\citep{luo2026dawn,kim2026dapd,ringel2026demask,wei2025accelerating},
cache reuse or focused computation~\citep{wu2025fastdllm,liu2025dllmcache,hu2025freecache,liang2026focus},
and speculative checking~\citep{pan2026blockspec}. These routes are
effective, but they often require learned components, modified decision
scores, explicit dependency models, or specialized system support. We ask
a complementary question: does the native denoising trajectory already
contain enough evidence to accept unresolved positions without extra
training, backbone changes, or logit modification?

Early acceptance faces two immediate failure modes: the current
prediction may be weakly supported, or it may change as denoising
evolves. These failure modes suggest two complementary signals already
available in the native denoising trajectory: top-1 confidence measures
instantaneous support, while cross-step persistence measures trajectory
stability. Both are computed from outputs already produced at each
executed step, requiring no auxiliary model or extra forward pass. We
test this hypothesis by comparing residual-position predictions with the
fixed-horizon dense endpoint and stratifying disagreement by confidence.

\begin{figure}[t]
\centering
\includegraphics[width=0.98\columnwidth]{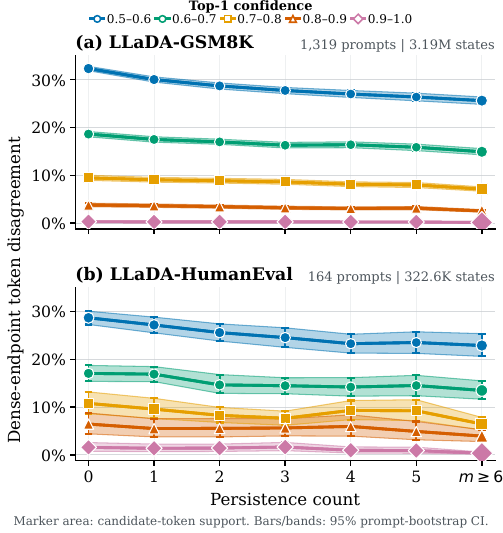}
\caption{
Dense-trace diagnostic for early acceptance of unresolved tokens.
On dense LLaDA traces from all 1,319 GSM8K and 164 HumanEval test prompts, token disagreement with the full-horizon dense endpoint generally decreases with persistence within fixed top-1 confidence ranges.
Bands and error bars show 95\% confidence intervals obtained by resampling prompts, marker area indicates the number of candidate-token states, and $m\ge6$ is pooled.
The overall decline shows that persistence provides reliability information beyond confidence alone.
}
\label{fig:persistence_risk}
\end{figure}

Figure~\ref{fig:persistence_risk} shows that, within fixed confidence
ranges, persistent predictions are less likely to disagree with the
dense endpoint on unperturbed LLaDA traces. Prompt-level bootstrap
intervals account for dependence among states from the same prompt.
Although individual bins contain local reversals, both tasks show the
same overall pattern: persistence adds reliability information beyond
confidence. Because committing tokens early can change later
predictions, we also test on actual CORA-Diff runs whether persistent
predictions remain more likely to match the paired dense endpoint.

Motivated by this observation, we propose \textbf{CORA-Diff}
(\textbf{Co}nfidence-Oriented \textbf{R}esidual \textbf{A}cceptance for
Diffusion Language Models), a training-free residual acceptance decoder.
CORA-Diff preserves the original transfer rule and applies its gate only
to positions that rule leaves unresolved. A position is accepted only
when its top-1 prediction is both confident and persistent. Positions
fixed by the original rule or accepted by CORA-Diff remain visible as
context. Once the two rules jointly resolve the block, decoding stops
and the remaining dense passes are skipped. Thus, CORA-Diff reduces the
number of executed steps without an auxiliary verifier, backbone
changes, or logit modification.

We evaluate reliability by comparing early accepted tokens with the
fixed-horizon dense endpoint. Our analysis explains why high confidence
and persistent predictions favor endpoint agreement, and paired
post-intervention trajectories test this relationship on actual
CORA-Diff states. We select thresholds on an independent calibration
set, characterize this selection with a finite-sample bound, and then
measure task quality and block-level computation savings on held-out
benchmarks.

We select $(\delta_p,m)=(0.65,1)$ on a separate 1,000-prompt GSM8K
calibration subset and freeze it for all reported test and
cross-backbone transfer experiments.
We evaluate LLaDA-8B-Instruct on GSM8K, MATH, HumanEval, and MBPP, and
test transfer to Dream. CORA-Diff has the lowest measured runtime in all
eight matched LLaDA settings; task scores match or exceed dense decoding
in five settings, and the largest decrease is 1.22 points.
Under a shared EOS-aware controller implemented with end-of-text
prediction (EoTP), decoding terminates only after a committed
end-of-sequence (EOS) token. Its incremental deployment speedups are
$2.70\times$ and $3.32\times$ over EOS-aware dense decoding on GSM8K
and HumanEval. The fixed-horizon 1024/1024 mechanism-isolation protocol
gives the larger $13.14\times$ result.
With the same thresholds and no retuning, CORA-Diff reaches
$3.18\times$--$3.53\times$ speedup on Dream, with task-score changes of
at most 0.006.

Our contributions are:
\begin{itemize}
    \item We propose \mbox{\textbf{CORA-Diff}}, a training-free residual
    acceptance method without logit modification.
    It applies confidence-and-persistence gating only to positions left
    unresolved by the original transfer rule.

    \item We formalize early acceptance through dense-endpoint
    disagreement and give a margin-based sufficient condition for
    accepted-token agreement. Under an explicit at-risk
    persistence--drift separation assumption, we derive a conditional
    drift-tail bound and analyze the actual calibration selector.

    \item We show that this lightweight rule enables block-level early
    termination.
    It achieves $2.70\times$--$3.32\times$ incremental speedups over
    EOS-aware dense decoding, reaches $13.14\times$ in fixed-horizon
    mechanism isolation, and
    transfers to Dream without retuning.
\end{itemize}

\section{Related Work}
\label{sec:related_work}

\paragraph{Diffusion language models.}
Diffusion language models generate text by iteratively denoising or
unmasking corrupted sequences. Prior work studies discrete corruption,
continuous diffusion over embeddings, and masked or score-based
objectives~\citep{austin2021structured,campbell2022continuous,li2022diffusionlm,lou2024sedd,shi2024simplified,sahoo2024mdlm,zheng2024masked,ou2025absorbing}.
Large-scale examples include LLaDA, DiffuLLaMA, and
Dream~\citep{nie2025llada,gong2025diffullama,ye2025dream}.
Block Diffusion combines autoregressive dependencies across blocks with
diffusion refinement within each block~\citep{arriola2025blockdiffusion}.
CORA-Diff leaves these modeling choices and the pretrained backbone
unchanged and targets repeated work during inference.

\paragraph{Acceptance and step reduction.}
DLM accelerators use different reliability signals and therefore stop
at different operating points~\citep{kang2025parallelbench}.
The four baselines in our matched comparison span the closest routes:
Prophet uses early answer convergence, KLASS uses distributional
stability, DAPD uses dependency-aware decoding, and Learn2PD trains a
lightweight filter for endpoint agreement~\citep{li2025prophet,kim2025klass,kim2026dapd,bao2025learn2pd}.
Other methods adapt block sizes, confidence schedules, or token
clusters~\citep{lu2025adablock,mohamed2025sched,luo2026dico}.
Learned policies and trace-based rules further use model training,
self-distillation, or decoding histories~\citep{chen2026dmax,chen2025dparallel,wang2025creditdecoding,sun2026tracelock,li2026tspd}.
Dependency-aware methods explicitly model interactions among masked
positions~\citep{luo2026dawn,ringel2026demask,sahin2026adas}.
CORA-Diff targets block completion rather than token acceptance alone:
it clears the last unresolved positions so that the block can skip
remaining dense passes.

\paragraph{Per-step and system acceleration.}
An orthogonal line reduces the cost of each executed step through KV
reuse, focused computation, speculative verification, or expert
offloading~\citep{wu2025fastdllm,liu2025dllmcache,hu2025freecache,liang2026focus,pan2026blockspec,chen2026tide}.
WeDLM changes the attention structure and reorders generation to enable
prefix KV caching and streaming decoding~\citep{liu2026wedlm}.
These methods reduce per-step or system cost, whereas CORA-Diff reduces
the number of executed dense steps; the cache experiment tests their
compatibility. This distinction also separates CORA-Diff from broader
adaptive-computation methods that change layer, width, or token-level
execution~\citep{goyal2020powerbert,hou2020dynabert,elhoushi2024layerskip,raposo2024mod,devvrit2024matformer}.

\section{Method}
\label{sec:method}

We propose \textbf{CORA-Diff} (\textbf{Co}nfidence-Oriented \textbf{R}esidual \textbf{A}cceptance for Diffusion Language Models), a training-free DLM decoder.
It freezes the backbone and retains dense Transformer execution at every step that is run.
It reduces the number of such steps by accepting unresolved positions whose top-1 predictions are both confident and persistent.
Figure~\ref{fig:cora_overview} summarizes the procedure.
Accepted tokens remain visible as context.
When no unresolved position remains, CORA-Diff terminates the block and skips the remaining forward passes.

\begin{figure*}[t]
\centering
\includegraphics[width=0.95\textwidth]{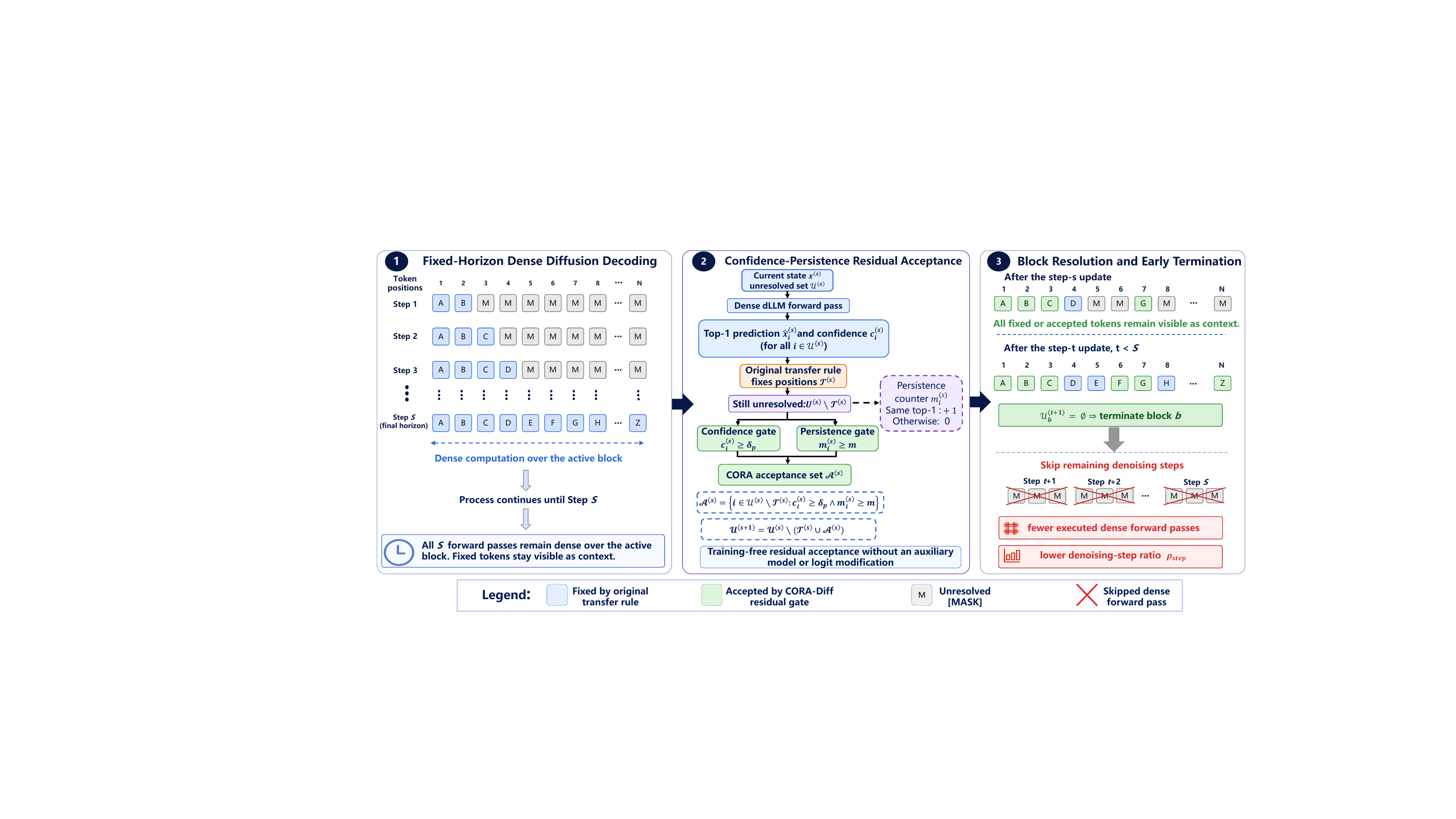}
\caption{
Overview of CORA-Diff.
Standard decoding executes a fixed denoising horizon for each block.
At each step, the original transfer rule first fixes a subset of positions.
CORA-Diff applies its residual gate only to the remaining positions and accepts a token when it satisfies both $c_i^{(s)} \ge \delta_p$ and $m_i^{(s)} \ge m$.
Tokens fixed by either rule remain visible as context.
Once no unresolved position remains, the block terminates and skips the remaining dense forward passes.
}
\label{fig:cora_overview}
\end{figure*}
\subsection{Diffusion Decoding Setup}
\label{sec:diffusion_setup}

Let $\mathbf{x}^{(s)}=(x_1^{(s)},\dots,x_n^{(s)})$ be the partially denoised sequence at step $s$.
Let $\mathcal{U}^{(s)}$ be the set of unresolved positions.
For each unresolved token $i\in\mathcal{U}^{(s)}$, the diffusion language model produces logits $\boldsymbol{\ell}_i^{(s)}$ and token distribution $\mathbf{p}_i^{(s)}=\operatorname{softmax}(\boldsymbol{\ell}_i^{(s)})$.
The current prediction and its confidence are
\begin{equation}
\hat{x}_i^{(s)}=\arg\max_v p_{i,v}^{(s)},
\qquad
c_i^{(s)}=\max_v p_{i,v}^{(s)} .
\end{equation}

In the blockwise masked-diffusion decoding setup used by LLaDA,
generation tokens are partitioned into blocks and decoded one block at a time.
Each block has a fixed denoising horizon.
At each step, the model predicts all currently masked positions, after which
the original transfer rule fixes a subset by confidence or remasking.
This conservative schedule can keep refining already-stable predictions.
CORA-Diff adds training-free early acceptance on top of the original schedule.

\subsection{Confidence-Oriented Residual Acceptance}
\label{sec:residual_acceptance}

At temperature zero, the trajectory and full-horizon endpoint are
deterministic for each prompt. Let $Q\sim\mathcal D$ and let
$Y_Q^\star$ be the output of the original dense decoder. At the prompt
level, probabilities are induced by $Q\sim\mathcal D$; the state-level
analysis in Section~\ref{sec:theoretical_motivation} additionally draws
a token--step state from a fixed offline analysis measure. We evaluate
early decisions by population disagreement with $Y_Q^\star$, which
measures deviation from dense decoding rather than task correctness.

CORA-Diff uses top-1 confidence and cross-step prediction persistence,
both available from the current decoding trajectory.

For each unresolved token, we maintain a persistence counter:
\begin{equation}
\label{eq:persistence_counter}
m_i^{(s)}
=
\begin{cases}
m_i^{(s-1)}+1, & \text{if } \hat{x}_i^{(s)}=\hat{x}_i^{(s-1)},\\
0, & \text{otherwise}.
\end{cases}
\end{equation}
A larger $m_i^{(s)}$ means that the same prediction has persisted for more consecutive denoising steps.
At a position's first executed step the counter is zero. Thus $m=0$
is confidence-only, whereas $m=1$ requires the same top-1 prediction
on two consecutive executed steps; skipped steps do not update it.

CORA-Diff accepts token $i$ at step $s$ if it is both confident and persistent:
\begin{equation}
\label{eq:cora_acceptance}
a_i^{(s)}
=
\mathbb{I}
\left[
c_i^{(s)}\ge\delta_p
\;\land\;
m_i^{(s)}\ge m
\right],
\end{equation}
where $\delta_p$ is the confidence threshold and $m$ is the required persistence window.
Let $\mathcal{T}^{(s)}\subseteq\mathcal{U}^{(s)}$ be the positions fixed by the original transfer rule at step $s$.
The residual gate is applied only to the remaining positions; let $\mathcal{A}^{(s)}$ denote the positions it accepts.
Each token $i\in\mathcal{A}^{(s)}$ is fixed to its current prediction, and the unresolved set is updated by both rules:
\begin{equation}
\begin{aligned}
x_i^{(s+1)}&\leftarrow \hat{x}_i^{(s)}
&& (i\in\mathcal{A}^{(s)}),\\
\mathcal{U}^{(s+1)}
&=
\mathcal{U}^{(s)}
\setminus
\left(\mathcal{T}^{(s)}\cup\mathcal{A}^{(s)}\right).
\end{aligned}
\end{equation}
Accepted tokens remain visible as context for unresolved positions.
Thus, CORA-Diff does not discard accepted tokens.
It only prevents them from being denoised again.

The persistence condition prevents a one-step confidence spike from
triggering acceptance.

\paragraph{Why confidence is informative.}
Let $v=\hat{x}_i^{(s)}$ and
$\gamma_i^{(s)}=\ell_{i,v}^{(s)}-\max_{u\ne v}\ell_{i,u}^{(s)}$
be its current logit margin.
Let $s_{Q,i}^{\star}$ be the pass whose original transfer rule
permanently commits position $i$. This rule commits the current top-1
token, so
$Y_{Q,i}^{\star}=\arg\max_u\ell_{Q,i,u}^{\star}$, where
$\boldsymbol{\ell}_{Q,i}^{\star}$ denotes the pre-transfer logits.
Permanent means no later remask or modification. If the position remains
unresolved through the horizon, these are the final-pass logits and their
deterministically tie-broken top-1 token.

\begin{proposition}[Endpoint-preservation condition]
\label{prop:confidence_margin}
If
\begin{equation}
\label{eq:margin_preservation_condition}
\left\|
\boldsymbol{\ell}_{Q,i}^{\star}
-
\boldsymbol{\ell}_{i}^{(s)}
\right\|_{\infty}
<
\frac{\gamma_i^{(s)}}{2},
\end{equation}
then $\hat{x}_i^{(s)}=Y_{Q,i}^{\star}$.
Moreover, if $c_i^{(s)}>1/2$, then
\(
\gamma_i^{(s)}
\ge
\log\!\bigl(c_i^{(s)}/(1-c_i^{(s)})\bigr).
\)
\end{proposition}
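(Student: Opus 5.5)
The statement has two independent parts, and I will prove them separately. Write $\boldsymbol{\ell}=\boldsymbol{\ell}_i^{(s)}$, $\boldsymbol{\ell}^\star=\boldsymbol{\ell}_{Q,i}^{\star}$, $v=\hat{x}_i^{(s)}$, $\gamma=\gamma_i^{(s)}$, and $\varepsilon=\|\boldsymbol{\ell}^\star-\boldsymbol{\ell}\|_\infty$.

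\textbf{Preservation of the argmax.} Assume $\varepsilon<\gamma/2$. Then for every $u\neq v$,
\[
\ell^\star_{v}-\ell^\star_{u}\ge(\ell_v-\varepsilon)-(\ell_u+\varepsilon)\ge\gamma-2\varepsilon>0 .
\]
So $v$ is the strict, unique maximizer of $\boldsymbol{\ell}^\star$. Uniqueness matters because it makes the deterministic tie-breaking convention in the definition of $Y_{Q,i}^\star$ irrelevant.

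By the setup preceding the proposition, $Y_{Q,i}^\star=\arg\max_u\ell^\star_{u}$ in both cases. If the position is permanently committed at $s_{Q,i}^\star$, the original rule commits the current top-1 token. If it is never committed, $\boldsymbol{\ell}^\star$ are the final-pass logits. Either way $Y_{Q,i}^\star=v$.

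A point to check here is whether $\gamma>0$ is needed. If $\gamma=0$ (a tie at step $s$), the hypothesis $\varepsilon<0$ is vacuous, so nothing further is required.

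\textbf{The logit-odds bound.} Assume $c=c_i^{(s)}=p_v>1/2$. Let $u^\ast$ attain $\max_{u\ne v}\ell_u$. The key identity is that a probability ratio equals the exponential of the logit gap, since the softmax normalizer cancels:
\[
\frac{p_v}{p_{u^\ast}}=e^{\ell_v-\ell_{u^\ast}}=e^{\gamma}.
\]
Because the probabilities sum to one, $p_{u^\ast}\le\sum_{u\ne v}p_u=1-c$. Hence $e^\gamma=c/p_{u^\ast}\ge c/(1-c)$, and taking logarithms gives $\gamma\ge\log\bigl(c/(1-c)\bigr)$.

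The condition $c>1/2$ only ensures that the right-hand side is positive, so the bound is informative. It also guarantees $v$ is the unique argmax, since any competitor has probability at most $1-c<c$. Its only other role is $1-c>0$, which keeps the logarithm finite. If $1-c=0$ in exact arithmetic, softmax positivity rules this out whenever the vocabulary has at least two tokens.

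The main obstacle is not analytic but definitional. The proof must use exactly the convention established before the proposition: $Y_{Q,i}^\star$ is the argmax of the pre-transfer logits $\boldsymbol{\ell}^\star$. With that convention both parts reduce to the two short calculations above.
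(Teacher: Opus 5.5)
Your proof is correct and follows the paper's argument essentially step for step. You bound every competing reference-logit gap below by $\gamma-2\varepsilon>0$ to get a unique reference argmax, and you combine the softmax-ratio identity $e^{\gamma}=p_v/p_{u^\ast}$ with $p_{u^\ast}\le 1-c$ to get the logit-odds bound; your side remarks (uniqueness making tie-breaking irrelevant, and $c>1/2$ mattering only for the bound to be positive) are accurate.
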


\begin{proof}
Let
$\boldsymbol{\Delta}
=\boldsymbol{\ell}_{Q,i}^{\star}-\boldsymbol{\ell}_{i}^{(s)}$.
For every $u\ne v$,
\begin{align}
\ell_{Q,i,v}^{\star}-\ell_{Q,i,u}^{\star}
&=
\ell_{i,v}^{(s)}-\ell_{i,u}^{(s)}
+\Delta_v-\Delta_u \nonumber\\
&\ge
\gamma_i^{(s)}-|\Delta_v|-|\Delta_u| \nonumber\\
&\ge
\gamma_i^{(s)}-2\|\boldsymbol{\Delta}\|_\infty>0.
\end{align}
Hence $v$ is the unique dense-reference top-1 token and
$v=Y_{Q,i}^{\star}$.
If $u_2$ is the current runner-up, the softmax ratio gives
$\gamma_i^{(s)}=\log(p_{i,v}^{(s)}/p_{i,u_2}^{(s)})$.
Since $p_{i,u_2}^{(s)}\le1-c_i^{(s)}$, the stated lower bound follows.
\end{proof}

Although the reference logits are unavailable online, the condition
formalizes why high confidence creates a wider top-1 stability margin.
Persistence supplies an observable trajectory signal for whether that
margin is likely to be preserved; dense-trace diagnostics and paired
post-intervention analysis test this mechanism empirically.

\subsection{Population Disagreement and Calibration}
\label{sec:theoretical_motivation}

Let $h=(\delta_p,m)$ denote a CORA-Diff configuration. For
prompt $Q$, let $\mathcal{A}_Q(h)$ be the positions accepted by
the residual gate, let $s_i^h$ be the acceptance step for position
$i$, and let $\widetilde{Y}_{Q,i}^{h}$ be its accepted token.
We define prompt-level accepted-token disagreement as
\begin{equation}
\label{eq:accepted_disagreement}
L_Q^{\mathrm{acc}}(h)
=
\frac{1}{\max\{1,|\mathcal{A}_Q(h)|\}}
\sum_{i\in\mathcal{A}_Q(h)}
\mathbb{I}[\widetilde{Y}_{Q,i}^{h}\ne Y_{Q,i}^{\star}].
\end{equation}
We set this loss to zero when no position is accepted.
Its population value is
\begin{equation}
\label{eq:population_accepted_disagreement}
R_{\mathcal{D}}^{\mathrm{acc}}(h)
=
\mathbb{E}_{Q\sim\mathcal{D}}
[L_Q^{\mathrm{acc}}(h)].
\end{equation}
This quantity measures the reliability of the acceptance decisions
themselves.

\paragraph{How persistence controls drift-tail probability.}
Fix $h=(\delta_p,m)$; the trajectory and all state events below depend
on $h$, with superscripts omitted. To preserve the prompt-balanced
weighting in Eq.~\eqref{eq:accepted_disagreement}, let $\nu_Q^h$ be an
offline measure whose first-hit restriction is uniform over accepted
positions within each prompt; the supplement gives its canonical
definition. Probabilities below draw $Q\sim\mathcal D$ and a state from
$\nu_Q^h$. Let $\mathcal R_{i,s}$ mean that
$i$ remains unresolved after the original rule at $s$ and was not
accepted earlier, and let
$\mathcal G_{i,s}=\{c_i^{(s)}\ge\delta_p\}$.
The observable stratum $Z_{i,s}$ contains only confidence, normalized
time, block, and optional dataset/backbone bins. Define
$D_{Q,i}^{(s)}=\|\boldsymbol{\ell}_{Q,i}^{\star}
-\boldsymbol{\ell}_{i}^{(s)}\|_\infty$,
$g(\delta_p)=\frac12\log\frac{\delta_p}{1-\delta_p}$, and
$\mathcal B_{i,s}=\{D_{Q,i}^{(s)}\ge g(\delta_p)\}$.
For $r\ge1$, define
$\mathcal H_{i,s}^{(r)}
=\{\hat x_i^{(s-r+1)}=\hat x_i^{(s-r)}\}$ and
$\mathcal F_{i,s}^{(r)}=\cap_{j=1}^{r}\mathcal H_{i,s}^{(j)}$, with
$\mathcal F_{i,s}^{(0)}=\Omega$. For compactness, set
$\mathcal C_{i,s,z}^{(r)}
=\mathcal F_{i,s}^{(r)}\cap\mathcal R_{i,s}\cap\mathcal G_{i,s}
\cap\{Z_{i,s}=z\}$.

\begin{assumption}[Conditional persistence--drift separation]
\label{ass:persistence_lr}
Let $q_{z,r}^{+}$ and $q_{z,r}^{-}$ be the conditional probabilities
of $\mathcal H_{i,s}^{(r)}$ under $\mathcal B_{i,s}$ and
$\mathcal B_{i,s}^{c}$, respectively, given
$\mathcal C_{i,s,z}^{(r-1)}$.
For every supported $z,r$ with $q_{z,r}^{-}>0$, some
$\lambda_{z,r}>0$ satisfies
\begin{equation}
\label{eq:persistence_lr}
\frac{q_{z,r}^{+}}{q_{z,r}^{-}}
\le e^{-\lambda_{z,r}}.
\end{equation}
\end{assumption}

\begin{proposition}[Persistence-conditioned drift-tail bound]
\label{prop:persistence_drift}
Let
$\pi_z=\Pr(\mathcal B_{i,s}\mid\mathcal C_{i,s,z}^{(0)})$ and
$\Lambda_z(m)=\sum_{r=1}^{m}\lambda_{z,r}$.
Under Assumption~\ref{ass:persistence_lr} and $\delta_p>1/2$,
\begin{equation}
\label{eq:persistence_drift_bound}
\begin{aligned}
\Pr(\hat x_i^{(s)}\ne Y_{Q,i}^{\star}
\mid\mathcal C_{i,s,z}^{(m)})
\;&\le
\Pr(\mathcal B_{i,s}\mid\mathcal C_{i,s,z}^{(m)})
\\[-2pt]
&\le
\frac{\pi_z e^{-\Lambda_z(m)}}
{1-\pi_z+\pi_z e^{-\Lambda_z(m)}}.
\end{aligned}
\end{equation}
\end{proposition}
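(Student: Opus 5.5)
The argument has two parts. The first inequality follows from Proposition~\ref{prop:confidence_margin}. The second is a likelihood-ratio (Bayes) recursion over the persistence window, driven by Assumption~\ref{ass:persistence_lr}.

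\emph{First inequality.} On $\mathcal C_{i,s,z}^{(m)}\subseteq\mathcal G_{i,s}$ we have $c_i^{(s)}\ge\delta_p>1/2$. The second part of Proposition~\ref{prop:confidence_margin} then gives
\[
\gamma_i^{(s)}\ge\log\bigl(c_i^{(s)}/(1-c_i^{(s)})\bigr)\ge\log\bigl(\delta_p/(1-\delta_p)\bigr)=2g(\delta_p),
\]
since $c\mapsto c/(1-c)$ is increasing. On $\mathcal B_{i,s}^c$ we have $D_{Q,i}^{(s)}<g(\delta_p)\le\gamma_i^{(s)}/2$. This is exactly condition~\eqref{eq:margin_preservation_condition}, so the first part of the proposition gives $\hat x_i^{(s)}=Y_{Q,i}^\star$. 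Hence $\{\hat x_i^{(s)}\ne Y_{Q,i}^\star\}\cap\mathcal C_{i,s,z}^{(m)}\subseteq\mathcal B_{i,s}$. Conditioning on $\mathcal C_{i,s,z}^{(m)}$ under the offline measure preserves this inclusion, which yields the first inequality.

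\emph{Second inequality.} Work with posterior odds
\[
O_r=\Pr(\mathcal B\mid\mathcal C^{(r)})/\Pr(\mathcal B^c\mid\mathcal C^{(r)}),
\]
dropping indices. Because $\mathcal C^{(r)}=\mathcal C^{(r-1)}\cap\mathcal H^{(r)}$, Bayes' rule gives
\[
O_r=O_{r-1}\cdot\frac{\Pr(\mathcal H^{(r)}\mid\mathcal B,\mathcal C^{(r-1)})}{\Pr(\mathcal H^{(r)}\mid\mathcal B^c,\mathcal C^{(r-1)})}=O_{r-1}\,\frac{q_{z,r}^+}{q_{z,r}^-}\le O_{r-1}e^{-\lambda_{z,r}}.
\]
Iterating from $O_0=\pi_z/(1-\pi_z)$ gives $O_m\le \frac{\pi_z}{1-\pi_z}e^{-\Lambda_z(m)}$. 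Since $p=O/(1+O)$ is increasing in $O$, substituting and clearing denominators gives the stated bound $\pi_z e^{-\Lambda_z(m)}/(1-\pi_z+\pi_z e^{-\Lambda_z(m)})$.

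\emph{Main obstacle.} The difficulty is the bookkeeping of degenerate cases, not the algebra. I would handle them as follows:
\begin{itemize}
\item If $\pi_z\in\{0,1\}$ the bound is immediate. When $\pi_z=0$, $\mathcal B$ has zero mass throughout. When $\pi_z=1$ the right-hand side equals $1$.
\item If some $q_{z,r}^-=0$, the assumption is silent, so this case must be treated separately. I would argue that then either $q_{z,r}^+=0$ as well, so $\mathcal C^{(r)}$ is null and conditioning is vacuous or unsupported, or the intended reading is that the statement is made only for supported $(z,r)$. I would adopt the convention that the bound is asserted only where $\mathcal C_{i,s,z}^{(m)}$ has positive mass and all the $q^-$ are positive. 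Where needed, I would restrict the odds recursion to steps with $\Pr(\mathcal B^c\mid\mathcal C^{(r-1)})>0$.
\item I must check that $\mathcal C^{(r)}$ nests correctly. It does, because $\mathcal F^{(r)}=\mathcal F^{(r-1)}\cap\mathcal H^{(r)}$ while $\mathcal R$, $\mathcal G$ and $Z$ are fixed at step $s$.
\item I must check that the probabilities are all taken under the same joint law, namely $Q\sim\mathcal D$ together with a state drawn from $\nu_Q^h$, so that Bayes' rule applies without change.
\end{itemize}
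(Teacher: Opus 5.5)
Your proof is correct and is essentially the paper's argument. Proposition~\ref{prop:confidence_margin}, with $\delta_p>1/2$ giving $\gamma_i^{(s)}\ge 2g(\delta_p)$, places endpoint disagreement inside $\mathcal B_{i,s}$, and your step-by-step odds recursion is just the paper's chain-rule bound of the likelihood ratio by $e^{-\Lambda_z(m)}$ followed by Bayes' rule.

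One small correction to your degenerate-case remark: $q_{z,r}^-=0$ does not force $q_{z,r}^+=0$. Your fallback convention, that $q_{z,r}^->0$ for all $r\le m$, is therefore required rather than optional. It is also exactly what makes $\Lambda_z(m)$ defined under Assumption~\ref{ass:persistence_lr}.
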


\begin{proof}
The chain rule under $\mathcal C_{i,s,z}^{(r-1)}$ gives a likelihood
ratio at most
$e^{-\Lambda_z(m)}$, and Bayes' rule gives
Eq.~\eqref{eq:persistence_drift_bound}. Proposition~\ref{prop:confidence_margin}
makes endpoint disagreement a subset of $\mathcal B_{i,s}$.
\end{proof}
Because CORA-Diff accepts a residual candidate as soon as both gates
first hold,
\(\{s_i^h=s,Z_{i,s}=z\}
=\mathcal R_{i,s}\cap\mathcal G_{i,s}\cap
\mathcal F_{i,s}^{(m)}\cap\{Z_{i,s}=z\}
=\mathcal C_{i,s,z}^{(m)}\).
Thus Eq.~\eqref{eq:persistence_drift_bound} bounds disagreement under
the conditional distribution of first-hit states, not pointwise at a
deterministic state. If every stratum with positive first-hit acceptance mass has
$q_{z,r}^{-}>0$, $\pi_z\le\bar\pi$, and
$\lambda_{z,r}\ge\underline\lambda>0$ for all $r\le m$, averaging gives
\begin{equation}
\label{eq:uniform_persistence_consequence}
R_{\mathcal D}^{\mathrm{acc}}(\delta_p,m)
\le
\frac{\bar\pi e^{-m\underline\lambda}}
{1-\bar\pi+\bar\pi e^{-m\underline\lambda}}.
\end{equation}
Because the trajectory, analysis measure, and stratum-specific constants
depend on $h=(\delta_p,m)$, this is a configuration-specific bound;
threshold dependence is measured by the calibration sweep. At
$(0.65,1)$, $g(\delta_p)\approx0.310$ and only $\lambda_{z,1}$ enters.
Supplementary Table~S6 finds marginal prompt-bootstrap upper bounds
below one in five of six strata. The selected operating point therefore
exhibits the predicted contraction in every well-supported stratum; only
the sparsely populated highest-confidence stratum remains statistically
unresolved.

Early acceptance can also change later predictions at unresolved
positions. In the fixed-horizon analysis, let $Y_Q^h$ be the final
CORA-Diff output on the same $n_Q$-token generation canvas as $Y_Q^\star$.
We therefore define prompt-level final-output disagreement as
\begin{equation}
\label{eq:output_disagreement}
L_Q^{\mathrm{out}}(h)
=
\frac{1}{n_Q}
\sum_{i=1}^{n_Q}
\mathbb{I}[Y_{Q,i}^{h}\ne Y_{Q,i}^{\star}],
\end{equation}
with population value
\begin{equation}
\label{eq:population_output_disagreement}
R_{\mathcal{D}}^{\mathrm{out}}(h)
=
\mathbb{E}_{Q\sim\mathcal{D}}
[L_Q^{\mathrm{out}}(h)].
\end{equation}
The accepted-token quantity diagnoses local gate reliability, whereas
the final-output quantity captures downstream changes caused by early
intervention. Together they separate local acceptance fidelity from
trajectory-level effects; benchmark scores evaluate task correctness.

Figure~\ref{fig:persistence_risk} establishes the motivating association
on unperturbed dense trajectories. Supplementary Table~S6 complements it
with paired post-intervention trajectories and directly tests
Assumption~\ref{ass:persistence_lr} on actual CORA-Diff states.

We next analyze the selector used in our experiments. Let
$\mathcal{C}=\{Q_1,\ldots,Q_N\}$ be an independent calibration sample
from $\mathcal{D}$, let $\mathcal{H}$ be a finite threshold grid fixed
before calibration outcomes are examined, and let $h_0$ denote dense
decoding. For prompt $Q$, let $\mathcal K_Q$ be its blocks and let
$S_{Q,b}^{h}$ and $S_{Q,b}^{\mathrm{dense}}$ be the executed and dense
step counts for block $b$. Define the prompt ratio
\(\rho_Q(h):=\sum_bS_{Q,b}^{h}/\sum_bS_{Q,b}^{\mathrm{dense}}\in[0,1]\).
Let $S_Q(h)\in[0,1]$ be its task score. We write their population means as
$M_{\mathcal D}(h)$ and $\rho_{\mathcal D}(h)$, and their calibration
means as $\widehat M_{\mathcal C}(h)$ and
$\widehat\rho_{\mathcal C}(h)$. We similarly define
\begin{equation}
\label{eq:empirical_disagreement}
\widehat R_{\mathcal C}^{r}(h)
=\frac{1}{N}\sum_{j=1}^{N}L_{Q_j}^{r}(h),
\qquad r\in\{\mathrm{acc},\mathrm{out}\}.
\end{equation}
For score tolerance $\tau$ and output-disagreement tolerance
$\epsilon$, the reported empirical rule has the form
\begin{equation}
\label{eq:empirical_selector}
\begin{aligned}
\widehat h&\in
\operatorname*{argmin}_{h\in\mathcal H}\widehat\rho_{\mathcal C}(h),
\\[-2pt]
\text{s.t.}\quad
\widehat M_{\mathcal C}(h)&\ge\widehat M_{\mathcal C}(h_0)-\tau,
\quad
\widehat R_{\mathcal C}^{\mathrm{out}}(h)\le\epsilon .
\end{aligned}
\end{equation}

\begin{proposition}[Uniform bound for the reported selector]
\label{prop:uniform_calibration}
Let $\mathcal H_0=\mathcal H\cup\{h_0\}$ and
\(
\beta_N=\sqrt{\log(8|\mathcal H_0|/\alpha)/(2N)}.
\)
With probability at least $1-\alpha$, the sample means of task score,
prompt step ratio, accepted-token disagreement, and output disagreement
are all within $\beta_N$ of their population means, simultaneously over
$h\in\mathcal H_0$. Consequently, the feasible choice in
Eq.~\eqref{eq:empirical_selector} obeys
\begin{equation}
\label{eq:selected_configuration_bounds}
M_{\mathcal D}(\widehat h)
\ge M_{\mathcal D}(h_0)-\tau-2\beta_N,
\qquad
R_{\mathcal D}^{\mathrm{out}}(\widehat h)
\le\epsilon+\beta_N .
\end{equation}
\end{proposition}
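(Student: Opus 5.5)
The plan is a union bound over Hoeffding inequalities, followed by a chaining of the selector's two constraints through the uniform deviation event. First, for each fixed $h\in\mathcal H_0$, the four prompt-level quantities are deterministic functions of the prompt $Q$, since decoding is at temperature zero. These quantities are $S_Q(h)$, $\rho_Q(h)$, $L_Q^{\mathrm{acc}}(h)$ and $L_Q^{\mathrm{out}}(h)$. Each takes values in $[0,1]$: $S_Q$ and $\rho_Q$ by definition, and the two disagreement losses because they are averages of indicators, with $L^{\mathrm{acc}}$ set to zero when nothing is accepted.

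Because $\mathcal C$ is an i.i.d. sample from $\mathcal D$ drawn independently of the grid $\mathcal H$, which is fixed before outcomes are seen, the $N$ values $\{f_{Q_j}(h)\}_{j}$ are i.i.d. for each fixed pair $(f,h)$. The two-sided Hoeffding inequality then gives
\[
\Pr\bigl(|\widehat f_{\mathcal C}(h)-f_{\mathcal D}(h)|\ge t\bigr)\le 2e^{-2Nt^2}.
\]
There are $4|\mathcal H_0|$ such pairs. A union bound bounds the total failure probability by $8|\mathcal H_0|e^{-2Nt^2}$. Setting this equal to $\alpha$ gives exactly $t=\beta_N=\sqrt{\log(8|\mathcal H_0|/\alpha)/(2N)}$, which establishes the first claim.

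Second, work on this good event and take any feasible $\widehat h$ in Eq.~\eqref{eq:empirical_selector}. Since $\widehat h$ depends on the data, I will not apply Hoeffding to it directly. Instead I use that $\widehat h\in\mathcal H\subseteq\mathcal H_0$, so the uniform bounds hold at $\widehat h$ automatically; this is the only subtle point and the reason uniformity over the grid is needed. Applying the score bound at $\widehat h$, then the first feasibility constraint, then the score bound at $h_0$ gives
\[
M_{\mathcal D}(\widehat h)\ge \widehat M_{\mathcal C}(\widehat h)-\beta_N\ge \widehat M_{\mathcal C}(h_0)-\tau-\beta_N\ge M_{\mathcal D}(h_0)-\tau-2\beta_N.
\]
This explains why $h_0$ is included in $\mathcal H_0$.

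Similarly, the output-disagreement bound combined with the second feasibility constraint gives $R^{\mathrm{out}}_{\mathcal D}(\widehat h)\le \widehat R^{\mathrm{out}}_{\mathcal C}(\widehat h)+\beta_N\le\epsilon+\beta_N$. The argmin over $\widehat\rho$ plays no role in these two bounds; the step-ratio concentration is included only to support the companion statement $\rho_{\mathcal D}(\widehat h)\le\widehat\rho_{\mathcal C}(\widehat h)+\beta_N$.

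I expect no real obstacle. The only points needing care are checking boundedness in $[0,1]$, which I handled via the zero convention for $L^{\mathrm{acc}}$, and stating explicitly that the grid's independence from the calibration data makes the per-$h$ Hoeffding bounds valid before selection.
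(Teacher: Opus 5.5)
Your proposal is correct and takes the same approach as the paper. The paper proves the proposition with Hoeffding's inequality for each of the four $[0,1]$-valued prompt-level quantities at each $h\in\mathcal H_0$, a union bound over the $4|\mathcal H_0|$ two-sided events (which produces the factor $8|\mathcal H_0|$ in $\beta_N$), and then chains the two feasibility constraints on the resulting uniform-deviation event, exactly as you do with the data-dependent $\widehat h$ and the inclusion of $h_0$ in $\mathcal H_0$.
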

The supplement proves the proposition using Hoeffding's
inequality~\citep{hoeffding1963probability} and a union bound. For
i.i.d.\ calibration and deployment prompts, it quantifies the
population slack induced by data-dependent selection. Benchmark scores
assess correctness, while cross-task transfer is evaluated empirically.
Within each fixed-canvas task and budget, every prompt has the same
dense denominator in this ratio. The prompt mean
therefore equals the aggregate pass-count ratio reported in our
fixed-horizon tables.

\paragraph{From acceptance to runtime.}
When the original rule and CORA-Diff jointly resolve the active block,
the decoder skips its remaining dense passes. We report
$\rho_{\mathrm{step}}$, the executed-step count divided by the fixed
dense horizon; every executed step remains dense. The supplement gives
the derivation, complete algorithm, state, and dense-recovery case.

\section{Experiments}
\label{sec:experiments}

\begin{table*}[t]
\centering
{\small
\setlength{\tabcolsep}{1.7pt}
\renewcommand{\arraystretch}{0.96}
\begin{tabular}{@{}llccccc@{\hspace{4pt}}ccccc@{}}
\toprule
Task & Method
& \multicolumn{5}{c}{256/256}
& \multicolumn{5}{c}{1024/1024} \\
\cmidrule(lr){3-7}\cmidrule(l){8-12}
& & Metric $\uparrow$ & Time (s) $\downarrow$ & Tok/s $\uparrow$ & Spd. $\uparrow$ & Ratio $\downarrow$
& Metric $\uparrow$ & Time (s) $\downarrow$ & Tok/s $\uparrow$ & Spd. $\uparrow$ & Ratio $\downarrow$ \\
\midrule
\multirow{6}{*}{GSM8K}
& Original & $0.7763$ & $38592.97$ & $7.98$ & $1.00\times$ & $1.000$ & $0.7748$ & $252689.01$ & $1.28$ & $1.00\times$ & $1.000$ \\
& Prophet (ICLR'26) & $0.7618$ & $31053.67$ & $9.25$ & $1.24\times$ & $0.7966$ & $0.7789$ & $42660.82$ & $6.97$ & $5.92\times$ & $0.1671$ \\
& KLASS (NeurIPS'25) & $0.7582$ & $20980.90$ & $14.70$ & $1.84\times$ & $0.4309$ & $0.7703$ & $50464.07$ & $6.37$ & $5.01\times$ & $0.1590$ \\
& DAPD (ICML'26) & \best{$0.7801$} & $23239.23$ & $13.25$ & $1.66\times$ & $0.602$ & $0.7824$ & $140928.73$ & $2.31$ & $1.79\times$ & $0.558$ \\
& Learn2PD (ICLR'26) & $0.7680$ & $9461.66$ & $32.55$ & $4.08\times$ & -- & \best{$0.7870$} & $22060.60$ & $14.69$ & $11.45\times$ & -- \\
& \textbf{CORA-Diff (Ours)} & $0.7794$ & \best{$7854.28$} & \best{$39.18$} & \best{$4.91\times$} & \best{$0.2012$} & $0.7862$ & \best{$19231.44$} & \best{$16.96$} & \best{$13.14\times$} & \best{$0.0756$} \\
\midrule
\multirow{6}{*}{MATH}
& Original & \best{$0.2586$} & $101815.16$ & $4.41$ & $1.00\times$ & $1.000$ & \best{$0.2518$} & $681432.76$ & $0.66$ & $1.00\times$ & $1.000$ \\
& Prophet (ICLR'26) & $0.2550$ & $30467.38$ & $14.54$ & $3.34\times$ & $0.2791$ & $0.2475$ & $324491.79$ & $1.39$ & $2.10\times$ & $0.476$ \\
& KLASS (NeurIPS'25) & $0.2552$ & $28451.69$ & $15.40$ & $3.58\times$ & $0.2167$ & $0.2502$ & $216327.86$ & $2.08$ & $3.15\times$ & $0.318$ \\
& DAPD (ICML'26) & $0.2410$ & $58258.52$ & $8.58$ & $1.75\times$ & $0.572$ & $0.2450$ & $342428.52$ & $1.31$ & $1.99\times$ & $0.503$ \\
& Learn2PD (ICLR'26) & $0.2538$ & $12865.90$ & $34.94$ & $7.91\times$ & -- & $0.2490$ & $73541.28$ & $6.12$ & $9.27\times$ & -- \\
& \textbf{CORA-Diff (Ours)} & $0.2546$ & \best{$10728.94$} & \best{$40.45$} & \best{$9.49\times$} & \best{$0.1010$} & $0.2500$ & \best{$63804.57$} & \best{$7.05$} & \best{$10.68\times$} & \best{$0.0868$} \\
\midrule
\multirow{6}{*}{HumanEval} 
& Original & \best{$0.3902$} & $1717.98$ & $8.03$ & $1.00\times$ & $1.000$ & $0.3780$ & $17668.51$ & $1.37$ & $1.00\times$ & $1.000$ \\
& Prophet (ICLR'26) & $0.3537$ & $1248.39$ & $11.11$ & $1.38\times$ & $0.7164$ & $0.3660$ & $3369.55$ & $5.12$ & $5.24\times$ & $0.1879$ \\
& KLASS (NeurIPS'25) & $0.3780$ & $1368.55$ & $10.11$ & $1.26\times$ & $0.6451$ & $0.3780$ & $5896.30$ & $4.12$ & $3.00\times$ & $0.2644$ \\
& DAPD (ICML'26) & \best{$0.3902$} & $848.66$ & $15.52$ & $2.02\times$ & $0.494$ & $0.3841$ & $8335.65$ & $2.50$ & $2.12\times$ & $0.472$ \\
& Learn2PD (ICLR'26) & $0.3598$ & $517.99$ & $26.54$ & $3.32\times$ & -- & \best{$0.3963$} & $2069.93$ & $11.74$ & $8.54\times$ & -- \\
& \textbf{CORA-Diff (Ours)} & $0.3780$ & \best{$395.64$} & \best{$33.97$} & \best{$4.34\times$} & \best{$0.2278$} & $0.3780$ & \best{$1595.87$} & \best{$15.60$} & \best{$11.07\times$} & \best{$0.0894$} \\
\midrule
\multirow{6}{*}{MBPP}
& Original & $0.290$ & $11740.78$ & $7.49$ & $1.00\times$ & $1.000$ & $0.102$ & $81152.95$ & $1.38$ & $1.00\times$ & $1.000$ \\
& Prophet (ICLR'26) & $0.282$ & $6937.01$ & $12.45$ & $1.69\times$ & $0.5776$ & $0.098$ & $10784.85$ & $8.99$ & $7.52\times$ & $0.1312$ \\
& KLASS (NeurIPS'25) & $0.284$ & $5115.24$ & $17.18$ & $2.30\times$ & $0.4116$ & $0.100$ & $14694.81$ & $7.67$ & $5.52\times$ & $0.1701$ \\
& DAPD (ICML'26) & \best{$0.326$} & $5888.99$ & $11.06$ & $1.99\times$ & $0.502$ & $0.102$ & $39514.22$ & $2.72$ & $2.05\times$ & $0.487$ \\
& Learn2PD (ICLR'26) & $0.300$ & $2699.23$ & $32.37$ & $4.35\times$ & -- & $0.102$ & $7400.47$ & $15.72$ & $10.97\times$ & -- \\
& \textbf{CORA-Diff (Ours)} & $0.294$ & \best{$2295.44$} & \best{$38.36$} & \best{$5.11\times$} & \best{$0.1933$} & \best{$0.104$} & \best{$6782.60$} & \best{$18.03$} & \best{$11.96\times$} & \best{$0.0832$} \\
\bottomrule
\end{tabular}
}
\caption{
Main results under the matched Learn2PD-style LLaDA protocol.
Metrics are Flex EM, MATH accuracy, and code pass@1.
CORA-Diff uses the calibration-selected configuration $(\delta_p,m)=(0.65,1)$.
Original denotes the unmodified fixed-horizon dense decoder.
Time is the total wall-clock time over the complete test split; Time and Tok/s are three-run means.
Spd.\ and Ratio denote speedup and denoising-step ratio.
Cache reuse is disabled. Dashes mark unavailable step counts. Raw runs and confidence intervals are in the supplement.
}
\label{tab:main_results_all}
\end{table*}

\begin{table*}[t]
\centering
{\small
\setlength{\tabcolsep}{1.5pt}
\renewcommand{\arraystretch}{0.96}
\begin{tabular}{@{}lcccccccc@{}}
\toprule
Method & \multicolumn{4}{c}{GSM8K} & \multicolumn{4}{c}{HumanEval} \\
\cmidrule(lr){2-5}\cmidrule(l){6-9}
& Metric $\uparrow$ & Len./Blk. $\downarrow$ & Lat. $\downarrow$ & Spd. $\uparrow$
& Metric $\uparrow$ & Len./Blk. $\downarrow$ & Lat. $\downarrow$ & Spd. $\uparrow$ \\
\midrule
Original & $0.7748$ & $1024/32$ & $191.58/214.70$ & $1.00/--$ & $0.3780$ & $1024/32$ & $107.74/152.30$ & $1.00/--$ \\
CORA-Diff & $0.7862$ & $1024/32$ & $14.58/17.61$ & $13.14/--$ & $0.3780$ & $1024/32$ & $9.73/14.55$ & $11.07/--$ \\
Original $+$ EoTP & $0.7810$ & $118.4/4.2$ & $24.12/38.75$ & $7.94/1.00$ & $0.3841$ & $176.3/6.0$ & $24.60/39.80$ & $4.38/1.00$ \\
Learn2PD $+$ EoTP & $0.7865$ & $113.7/4.1$ & $10.36/16.48$ & $18.49/2.33$ & \best{$0.3902$} & $171.5/5.8$ & $9.12/14.90$ & $11.81/2.70$ \\
\textbf{CORA-Diff $+$ EoTP} & \best{$0.7892$} & \best{$112.9/4.0$} & \best{$8.94/13.72$} & \best{$21.43/2.70$} & $0.3841$ & \best{$169.8/5.7$} & \best{$7.41/12.05$} & \best{$14.54/3.32$} \\
\bottomrule
\end{tabular}
}
\caption{
EOS-aware deployment at maximum length and budget 1024.
Paired cells report mean active length/executed blocks (Len./Blk.), mean/P90 latency in seconds (Lat.), and speedup over fixed-horizon Original/Original $+$ EoTP (Spd.).
}
\label{tab:eos_aware_deployment}
\end{table*}

\subsection{Experimental Setup}
\label{sec:experimental_setup}

The main comparison follows the Learn2PD-style LLaDA-8B-Instruct
protocol on four benchmarks~\citep{nie2025llada,bao2025learn2pd,cobbe2021training,hendrycks2021measuring,chen2021evaluating,austin2021program}.
We reproduce all baselines from official implementations under shared
evaluation factors~\citep{li2025prophet,kim2025klass,kim2026dapd}.
All runs use complete test splits, batch size 1, bfloat16,
temperature 0, and one RTX 5090; the supplement gives seeds and prompts.

\paragraph{Calibration and configuration selection.}
We select $(\delta_p,m)$ on 1,000 GSM8K training prompts disjoint from all
reported evaluation sets, using the same 3-shot construction, harness,
decoding configuration, and scoring as the GSM8K test evaluation. The
prespecified rule requires Flex EM within 0.5 percentage points of dense
and final-output disagreement at most 1.0\%, then minimizes step ratio.
Dense Flex EM is 0.780 (floor 0.775); $(0.65,1)$ is selected with 0.779
Flex EM, 0.79\% disagreement, and ratio 0.2020. Only this held-out subset
affects selection; no reported test metric or runtime is used for tuning.
The pair remains fixed in all subsequent experiments.
Proposition~\ref{prop:uniform_calibration} characterizes selection under
the same-distribution assumption; task and backbone transfer are empirical.
Supplementary Table~S1 gives the complete sweep.

\paragraph{Baseline reproduction and controls.}
Each method keeps its released internal rule and recommended thresholds.
Supplementary Table~S2 gives exact settings, official-reference
configurations, and the CreditDecoding exclusion.
Cache reuse and sparse execution are disabled in the main comparison.
CORA-Diff is selected by our calibration rule, whereas reproduced
baselines retain released thresholds or checkpoints. The comparison
therefore provides a controlled evaluation of released operating points
under matched execution factors.

\subsection{Matched-Protocol Results}
\label{sec:main_results}

Table~\ref{tab:main_results_all} compares the standard 256/256 setting with the longer 1024/1024 setting.

\paragraph{Overall quality--efficiency trade-off.}
CORA-Diff has the lowest measured runtime in all eight task--length settings.
Under the fixed-horizon mechanism-isolation protocol, it reaches
$4.34\times$--$9.49\times$ speedup at 256/256 and
$10.68\times$--$13.14\times$ at 1024/1024.
Against Learn2PD, the closest runtime baseline, its measured mean
speedup is 9.1\%--30.9\% higher.

\paragraph{Effect of the denoising horizon.}
CORA-Diff executes 10.10\%--22.78\% of dense steps at 256/256 and
7.56\%--8.94\% at 1024/1024. The lower 1024/1024 ratios are consistent
with a larger late region of stable or redundant denoising work.
Measured speedup reaches 92.7\%--99.5\% of the ideal
$1/\rho_{\mathrm{step}}$ limit. The remaining gap is consistent with
bookkeeping, synchronization, and data-movement overhead.

\paragraph{Task-dependent stopping behavior.}
At 256/256, the step ratio varies from 0.1010 on MATH to 0.2278 on
HumanEval, whereas all four 1024/1024 ratios lie in the narrow
0.0756--0.0894 range. A long horizon therefore creates shared
late-stage redundancy that dominates task differences. At 256/256,
speed depends more on when each block's last unresolved positions
stabilize, rather than on task difficulty alone.

\paragraph{Comparison with baselines.}
CORA-Diff uses fewer reported steps than Prophet, KLASS, and DAPD.
DAPD preserves strong 256/256 scores but accepts conservatively;
Prophet uses answer-level convergence, KLASS tests distributional
stability, and DAPD models token dependencies. At their reported
operating points, these criteria retain more denoising work.
Learn2PD is the closest speed baseline and sometimes scores higher, but
uses a trained filter.
CORA-Diff's measured speedup is 9.1\%--30.9\% higher than Learn2PD's,
and it is the lowest-latency method in all eight matched settings
without a trained acceptance filter.

\paragraph{Executed steps explain the speed gap.}
For methods reporting step counts, ratio closely tracks wall-clock
speed. On GSM8K 1024/1024, CORA-Diff uses a 0.0756 ratio and reaches
$13.14\times$, versus ratios 0.1590--0.558 and speedups
$1.79\times$--$5.92\times$ for Prophet, KLASS, and DAPD.
The values are not exact reciprocals because controller and data-movement
costs differ, but they locate the main gain in earlier block completion.
CORA-Diff tests only residual positions with same-pass signals, converting
local acceptance into skipped dense passes.

\paragraph{Score changes and disagreement.}
Across the eight settings, CORA-Diff's observed metric change from
Original ranges from $-1.22$ to $+1.14$ points, and five scores match or
exceed dense decoding. Selection jointly constrains score and output
disagreement, allowing aggressive step reduction while preserving task
quality. Endpoint agreement measures decoder fidelity, while benchmark
accuracy measures task utility; using both prevents speed from being
optimized at the expense of answer-critical tokens.

\paragraph{Scope of the fixed-horizon comparison.}
The table controls shared factors; official settings are in the
supplement. The fixed 1024-token canvas is a mechanism-isolation stress
test of repeated denoising. We next remove post-response work with a
shared EOS-aware controller to measure deployment benefit.

\subsection{Deployment and Robustness Analysis}
\label{sec:deployment_robustness}

\paragraph{EOS-aware deployment.}
At maximum length and budget 1024, only a committed EOS removes the
suffix and future blocks. This removes post-response work, while
CORA-Diff reduces denoising before EOS. Table~\ref{tab:eos_aware_deployment}
shows $2.70\times$ and $3.32\times$ speedups over EOS-aware dense
decoding. Mean and P90 latency beat Learn2PD, although HumanEval scores
lower; similar lengths make shorter outputs an unlikely main cause.
The result is therefore consistent with CORA-Diff removing substantial
pre-EOS denoising rather than relying mainly on earlier truncation.

\paragraph{Cross-backbone transfer and cache compatibility.}
Supplementary Table~S3 reports $3.18\times$--$3.53\times$ on Dream
without retuning and cache combinations up to $53.65\times$
~\citep{ye2025dream,bao2025learn2pd,wu2025fastdllm}. These results
demonstrate transfer to a second backbone and composability with cache
reuse.

\paragraph{Component ablation.}
Table~\ref{tab:core_ablation} reports independent complete-test-set runs,
separate from Table~S1 and unused for selection. At frozen
$\delta_p=0.65$, one persistence check improves confidence-only Flex EM
from 0.7301 to 0.7794, while Tok/s falls from 43.51 to 39.12 (10.1\%) and
ratio rises from 0.1783 to 0.2012. Persistence alone gives 0.7557 Flex EM
at 35.06 Tok/s. Two checks reach ratio 0.2361 and 34.47 Tok/s without
improving over $m=1$, supporting the combined one-check rule.

\begin{table}[ht]
\centering
{\small
\setlength{\tabcolsep}{5.0pt}
\renewcommand{\arraystretch}{0.94}
\begin{tabular}{@{}lccc@{}}
\toprule
Variant & Flex EM $\uparrow$ & Tok/s $\uparrow$ & Ratio $\downarrow$ \\
\midrule
Confidence only ($m=0$) & $0.7301$ & \best{$43.51$} & \best{$0.1783$} \\
Persistence only & $0.7557$ & $35.06$ & $0.2245$ \\
Both ($m=1$) & \best{$0.7794$} & $39.12$ & $0.2012$ \\
Both ($m=2$) & $0.7763$ & $34.47$ & $0.2361$ \\
\bottomrule
\end{tabular}
}
\caption{Independent post-selection component ablation on the complete GSM8K test split ($256/256$, $\delta_p=0.65$); not used for selection. Tok/s averages three runs; Flex EM and Ratio are deterministic at temperature 0.}
\label{tab:core_ablation}
\end{table}

\section{Conclusion, Limitations, and Future Work}
\label{sec:conclusion}

CORA-Diff retains the original transfer rule and terminates resolved
blocks without a learned acceptance model. It is fastest in all eight
matched settings, gives $2.70\times$--$3.32\times$ EOS-aware
speedups, and reaches $13.14\times$ in fixed-horizon mechanism isolation.
Theory and paired trajectories connect confidence and persistence to
endpoint agreement, explaining the observed reduction in dense computation.

\paragraph{Limitations and future work.}
Paired validation supports five of six strata; one sparse stratum remains
data-limited, and broader deployment tests are future work.

\clearpage
\bibliography{references}
\end{document}